\documentclass[12pt]{article}
\RequirePackage{kpfonts}
\RequirePackage[sf,bf,small,raggedright]{titlesec}
\setlength{\textheight}{8.5in} 
\setlength{\textwidth}{6in}
\setlength{\topmargin}{-0.375in}
\setlength{\oddsidemargin}{.25in}
\setlength{\evensidemargin}{.25in}
\setlength{\headheight}{0.200in}
\setlength{\headsep}{0.4in}
\setlength{\footskip}{0.500in}
\setlength{\parskip}{1ex}
\setlength{\parindent}{1.25cm}
\usepackage{hyperref}
\hypersetup{
     colorlinks   = true,
     citecolor    = blue
}

\usepackage{dsfont}
\usepackage{comment}
\usepackage{textcomp}
\usepackage{amssymb}
\usepackage{bbm}
\usepackage{fancyhdr}
\usepackage{amsmath}
\usepackage{amsbsy,amsthm}
\usepackage{amscd}
\usepackage{latexsym}
\usepackage{graphicx}   
\usepackage{pdfsync}
\usepackage{blkarray}
\usepackage{multirow}
\usepackage{hyperref}
\usepackage{xcolor}
\usepackage{enumerate}
\usepackage{tcolorbox}
\usepackage{subcaption}
\usepackage[utf8]{inputenc}
\usepackage[export]{adjustbox}
\usepackage{wrapfig}
\usepackage[parfill]{parskip}
\usepackage{bm}

\usepackage{latexsym}
\usepackage{amsmath,amsfonts,amssymb,mathrsfs,color}
\usepackage{amsthm}
\usepackage{titlesec}
\usepackage{times}
\usepackage[margin=2.6cm]{geometry}

\usepackage[utf8]{inputenc} % allow utf-8 input
\usepackage[T1]{fontenc}    % use 8-bit T1 fonts
\usepackage{hyperref}       % hyperlinks
\usepackage{url}            % simple URL typesetting
\usepackage{booktabs}       % professional-quality tables
\usepackage{amsfonts}       % blackboard math symbols
\usepackage{nicefrac}       % compact symbols for 1/2, etc.
\usepackage{microtype}      % microtypography
\usepackage{amsmath,amssymb,mathrsfs,amsthm}
\usepackage{times}
\usepackage{algorithm}
\usepackage[noend]{algpseudocode}

\newcommand{\doublehat}[1]{\widehat{#1}}

\newcommand{\E}{\mathbb E}
\newcommand{\F}{\mathcal F}

\newcommand{\A}{\mathcal{A}}
\renewcommand{\L}{\mathcal{L}}
\renewcommand{\O}{\mathcal{O}}
\newcommand{\B}{\mathcal{B}}
\newcommand{\N}{\mathbb{N}}
\newcommand{\G}{\mathcal{G}}

\newcommand{\x}{\mathbf{x}}
\newcommand{\norm}[1]{\left\lVert#1\right\rVert}

\newcommand{\R}{\mathbb{R}}
\newcommand{\abs}[1]{\left\lvert#1\right\rvert}

\renewcommand{\Theta}{\varTheta}
\newcommand{\wh}{\widehat}
\renewcommand{\tilde}{\widetilde}
\newcommand{\one}{\mathbb{I}}

\DeclareMathOperator*{\argmin}{\arg\!\min}
\DeclareMathOperator*{\argmax}{\arg\!\max}
\DeclareMathOperator*{\polylog}{polylog}

\newtheorem{theorem}{Theorem}
\newtheorem{proposition}{Proposition}

\renewcommand{\phi}{\varphi}

\title{\sc Restless Linear Bandits}
\author{Azadeh Khaleghi\\ENSAE - CREST, IP Paris }
\date{~}

\begin{document}
\maketitle

\begin{abstract}
A more general formulation of the linear bandit problem is considered to allow for dependencies over time.
Specifically, it is assumed that there exists an unknown $\R^d$-valued stationary $\varphi$-mixing sequence of parameters $(\theta_t,~t \in \N)$ which gives rise to pay-offs. This instance of the problem can be viewed as a generalization of both the classical linear bandits with iid noise, and the finite-armed restless bandits. In light of the well-known computational hardness of optimal policies for restless bandits, an approximation is proposed whose error is shown to be controlled by the $\varphi$-dependence between consecutive $\theta_t$. An optimistic algorithm, called LinMix-UCB, is proposed for the case where $\theta_t$ has an exponential mixing rate. The proposed algorithm is shown to incur a sub-linear regret of $\O\left(\sqrt{d n\polylog(n) }\right)$ with respect to an oracle that always plays a multiple of  $\E\theta_t$. The main challenge in this setting is to ensure that the exploration-exploitation strategy is robust against long-range dependencies. 
  The proposed method relies on Berbee's coupling lemma to carefully select near-independent samples and construct
  confidence ellipsoids around empirical estimates of $\E\theta_t$.
\end{abstract}

\section{\sc \bfseries Introduction}
The problem of sequential decision making in the presence of uncertainty is prevalent in a variety of modern applications such as online advertisement, recommendation systems,  network routing and dynamic pricing. Through the framework of {\em stochastic bandits} one can model this task as a repeated game between a {\em learner} and a stochastic {\em environment}: at every time-step, the learner chooses an {\em action} from a pre-specified set of actions $\A$ and receives a (random) real-valued pay-off. The objective is to maximize the expected cumulative pay-off over time. 
In a stochastic {\em linear} bandit model the pay-off $Y_t$ received at each time-step $t$ is the inner product between an $\R^d$-valued action $X_t$ and an unknown parameter vector $\theta \in \R^d$. That is, 
$Y_t = \langle \theta, X_t \rangle + \eta_t $ with random noise $\eta_t$; see e.g. \cite{A02,ADC11}, as we well as \cite{BC12, LS20} and references therein. Let us point out that in the case where the action space $\A$ is restricted to a set of standard unit vectors in $\R^d$ the problem is reduced to that of finite-armed bandits. 
The noise sequence $\eta_t$ is typically assumed to be independently and identically distributed (iid). 
However, this assumption does not usually hold in practice.

In this paper, we consider a more general formulation of the linear bandit problem which allows for dependencies over time. 
More specifically, we assume that there exists an unknown $\R^d$-valued stationary sequence of parameters $(\theta_t,~t \in \N)$ giving rise to the pay-offs 
$Y_t = \langle \theta_t,X_t \rangle,~t \in \N$ with the actions $X_t$ taking values in a unit ball in $\R^d$.  
As with any bandit problem, the task of balancing the trade-off between exploration and exploitation calls for finite-time analysis, which in turn relies on concentration inequalities for empirical averages. For this reason, we opt for a natural approach in time-series analysis which is to require that the process satisfy a form of asymptotic independence.
More specifically, we assume that 
$(\theta_t,~t \in \N)$ is $\phi$-mixing so that its sequence of $\phi$-mixing  coefficients $\varphi_m,~m \in \N$ defined by 
\begin{equation*}
\varphi_m:=\sup_{j \in \N}\sup_{\substack{U \in \sigma(\{\theta_t:t=1,\dots,j\})\\ 
V \in \sigma(\{\theta_t:t \geq j+m\}) }}\left |P(V)-P(V|U)\right| 
\end{equation*}
converges to $0$ as $m$ approaches infinity, see, e.g. \cite{BRA07, RIO17}. 
Observe that while $\theta_t$ are identically distributed here, they are not independent. 

To compare with the classical formulation, we can write $Y_t = \langle \theta^*,X_t \rangle + \eta_t $ where $\theta^* := \E \theta_t$ is the (unknown) stationary mean of $\theta_t$ and $\eta_t:=\langle \theta_t-\theta^*,X_t \rangle$ is the noise process, which is clearly non-iid.
This instance of the problem can be viewed as a generalization of the classical linear bandit problem with iid $\eta_t$ \cite{ADC11} and the finite-armed restless Markovian and $\varphi$-mixing bandits considered in \cite{ORAM14} 
and \cite{GK19} respectively. 
Observe that in much the same way as with the examples given in \cite[Section 3]{ORAM14}, a strategy that leverages temporal dependencies can accumulate a higher expected pay-off than that which can be obtained by playing a fixed action in $\A$ that is most aligned with $\theta^*$. In the finite-armed restless bandit problem, this is called the {\em optimal  switching strategy} whose exact computation in certain related instances of the problem is known to be intractable \cite{PT99}. In \cite{CGD23}, an algorithm is proposed for finite-armed restless bandits in the case where the pay-off distributions follow an AR model.
An approximation of the optimal switching strategy for finite-armed restless $\varphi$-mixing bandits is provided in \cite{GK19}. 

We build upon \cite{GK19} to approximate the optimal restless bandit strategy using the Optimism in the Face of Uncertainty principal. We show in Proposition~\ref{prop:oraclediff} that the approximation error of an oracle which always plays a multiple of $\theta^*$, is controlled by $\varphi_1$ and the $\ell_2$ norm of $\theta_t$, provided that the latter is almost surely bounded. The proof relies on  Vitali's covering theorem \cite{M99} to address the technical challenges presented by the infinite action space $\A \subseteq \R^d$. We propose an algorithm, namely LinMix-UCB, which aims to minimize the regret in the case where the process $(\theta_t,~t \in \N)$  has an exponential mixing rate. The proposed algorithm is inspired by the UCB-based approach of
 \cite{ADC11} which is in turn 
 designed for the simpler setting where the noise $\eta_t$ is iid. 
 The main challenge in our setting is to devise an exploration-exploitation strategy that is robust against the long-range dependencies  present in the process $(\theta_t,~t \in \N)$. 
  We rely on Berbee's coupling lemma \cite{BER79} to carefully select near-independent samples and construct
  confidence ellipsoids around empirical estimates of $\theta^*$. We demonstrate that LinMix-UCB incurs a sub-linear regret of $\O\left(\sqrt{d n\polylog(n) }\right)$ with respect to an oracle which always plays a multiple of $\theta^*$; this is shown in Theorems~\ref{thm:regfh}~and~\ref{thm:regih} for the finite and infinite horizon settings respectively. 
 
\section{\sc \bfseries Preliminaries and Problem Formulation}\label{sec:pre}
Let $\Theta \subseteq \R^d$ for some $d \in \N$. Suppose that $\boldsymbol{\theta}:=(\theta_t,~t \in \N)$ is a stationary sequence of $\Theta$-valued random variables defined on a probability space $(\Omega, \B, P)$ such that $\theta_t,~t \in \N$ takes values in the space $\L_{\infty}(\Omega, \B, P; \R^d)$ equipped with its Euclidean norm $\norm{\cdot}_2$. This means that $\norm{\theta_t}_{\L_{\infty}}:=\sup_{\omega \in \Omega} \norm{\theta_t(\omega)}_2 < \infty$. We may sometimes use $\L_{\infty}$ or $\L_{\infty}(\Omega;\R^d)$ for $\L_{\infty}(\Omega,\B,P;\R^d)$ when the remaining parameters are clear from the context. 
Recall that the $\varphi$-dependence (see, e.g. \cite{BRA07}) between any pair of $\sigma$-subalgebras $\mathcal U$ and $\mathcal V$ of $\B$ is given by 
$
\varphi(\mathcal U,\mathcal V) := \sup\{\abs{P(V) - P(V|U)} : U \in \mathcal U, P(U)>0, V \in \mathcal V \}
$.
This notion gives rise to the sequence of $\phi$-mixing coefficients $\phi_m,~m \in \N $ of $\boldsymbol{\theta}$ where for each $m \in \N$ 
\begin{align*}
\phi_m&:=\sup_{j \in \N}\phi(\sigma(\{\theta_{t}: 1\leq t \leq j\}),\sigma(\{\theta_{t}: t \geq j+m\})).
\end{align*}
We further assume that the process $\boldsymbol{\theta}$ is $\phi$-mixing so that 
$\lim_{m\rightarrow \infty}\varphi_m=0$. 
Let $\A$ be the unit ball in $\R^d$, which we call the {\em action space}. 
The linear bandit problem considered in this paper can be formulated as the following repeated game. 
At every time-step $t \in \mathbb N$, the player chooses an {\em action} from $ \A$ according to a mapping $X_t:\Omega \rightarrow \A$ and receives as {\em pay-off} the inner product 
$\langle \theta_t, X_t\rangle$ 
between $\theta_t$ and $X_t$. 
The objective is to maximize the expected sum of the accumulated pay-offs. Let $\mathcal{F}_t,~{t \geq 0}$ be a filtration that tracks the pay-offs $\langle \theta_t,X_t \rangle$ obtained in the past $t$ rounds, i.e. $\mathcal{F}_0 = \{\emptyset,\Omega\}$, and $\mathcal{F}_{t} = \sigma(\{\langle \theta_1,X_1 \rangle, \dots, \langle \theta_t, X_t\rangle\})$ for $~t \geq 1$. Each mapping
$X_t,~t \geq 1$ is assumed to be measurable with respect to $\mathcal{F}_{t-1}$; this is equivalent to stating that $X_t$ for each $t \geq 1$ can be written as a function of the past pay-offs up to $t-1$. 
The sequence $\boldsymbol{\pi}:= (X_t,~t \in \N)$ is called a {\em policy}. 
Let $\Pi = \{ \boldsymbol{\pi}:= (X^{(\boldsymbol{\pi})}_t,~{t\geq 1}) : X^{(\boldsymbol{\pi})}_t \text{ is } \mathcal{F}_{t-1} \text{-measurable for all } t\geq 1\}$  denote the space of all possible policies and define
\begin{align}\label{eq:defn:nu}
\nu_n = \sup_{\boldsymbol{\pi} \in \Pi} \sum_{t=1}^n \E \langle \theta_t, X^{(\boldsymbol{\pi})}_t \rangle.
\end{align}
To simplify notation, we may sometimes write $X_t$ for $X^{(\boldsymbol{\pi})}_t$ when the policy $\boldsymbol{\pi}$ is clear from the context.

\section{\sc \bfseries Main Results}
Consider the restless linear bandit problem formulated in Section~\ref{sec:pre}.
Let $\theta^* = \E\;\theta_1 $ be the (stationary) mean of the process $(\theta_t,~ t \in \N)$. In light of the well-known computational hardness of optimal switching strategies for restless bandits \cite{PT99}, we aim to approximate $\nu_n$ via the following relaxation
\begin{equation}\label{eq:tildenu}
\tilde{\nu}_n = \sup_{\boldsymbol{\pi} \in \Pi}  ~\sum_{i=1}^n \E \langle \theta^*,X^{(\boldsymbol{\pi})}_t \rangle\;.
\end{equation}
A natural first objective is thus to quantify the error $\nu_n -\tilde{\nu}_n$ incurred by this approximation. We have the following result which can be considered as an analogue of \cite[Proposition~9]{GK19}. 
\begin{proposition}\label{prop:oraclediff}
Let $\varphi_1$ be the first $\varphi$-mixing coefficient of the process $(\theta_t,~ t \in \N)$.
For every $n\geq 1$ it holds that
\[
\nu_n - \tilde{\nu}_n \leq 2 n \phi_1 \norm{ \theta_t}_{\L_{\infty}}\;.
\]
\end{proposition}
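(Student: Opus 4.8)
The plan is to reduce the gap between the two suprema to a per-round estimate and then control each round through the total-variation reading of the $\varphi$-mixing coefficient. Since $\nu_n$ and $\tilde\nu_n$ are suprema over the same policy class $\Pi$, I would first apply the elementary inequality $\sup_{\boldsymbol\pi} A(\boldsymbol\pi) - \sup_{\boldsymbol\pi} B(\boldsymbol\pi) \le \sup_{\boldsymbol\pi}\big(A(\boldsymbol\pi)-B(\boldsymbol\pi)\big)$ to obtain
\[
\nu_n - \tilde\nu_n \;\le\; \sup_{\boldsymbol\pi\in\Pi}\;\sum_{t=1}^n \E\big\langle \theta_t - \theta^*,\, X^{(\boldsymbol\pi)}_t\big\rangle .
\]
It then suffices to bound each summand $\E\langle\theta_t-\theta^*,X_t\rangle$ uniformly over policies by $2\varphi_1\norm{\theta_t}_{\L_\infty}$. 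Along the way I would record, by induction on $t$, the structural fact that the pay-off filtration obeys $\F_{t-1}\subseteq\G_{t-1}:=\sigma(\theta_1,\dots,\theta_{t-1})$, so that every admissible action $X_t$ is $\G_{t-1}$-measurable: indeed $X_1$ is constant, and each pay-off $\langle\theta_s,X_s\rangle$ is $\G_{s-1}$-measurable once $X_s$ is, whence so is $X_{t}$.

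Next I would condition on the past. Because $X_t$ is $\G_{t-1}$-measurable and bounded, pulling it out of the conditional expectation and using Cauchy--Schwarz together with $\norm{X_t}_2\le 1$ gives
\[
\E\big\langle\theta_t-\theta^*,\,X_t\big\rangle
= \E\big\langle\, \E[\theta_t\mid\G_{t-1}]-\theta^*,\; X_t\big\rangle
\le \E\big\|\E[\theta_t\mid\G_{t-1}]-\theta^*\big\|_2 .
\]
This is the step at which the infinite action space causes no difficulty: the unit-ball constraint enters only through $\norm{X_t}_2\le 1$, so no covering or measurable-selection argument over $\A$ is needed, and the resulting bound is automatically uniform in $\boldsymbol\pi$.

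The crux is to prove $\E\norm{\E[\theta_t\mid\G_{t-1}]-\theta^*}_2\le 2\varphi_1\norm{\theta_t}_{\L_\infty}$, and here I would pass to conditional laws. Writing $\mu_\omega$ for a regular conditional distribution of $\theta_t$ given $\G_{t-1}$ (which exists since $\theta_t$ is $\R^d$-valued) and $\mu$ for its stationary law, one has $\E[\theta_t\mid\G_{t-1}](\omega)-\theta^*=\int z\,(\mu_\omega-\mu)(dz)$. Bounding a vector integral against the mean-zero signed measure $\mu_\omega-\mu$ by $\norm{z}_2\le\norm{\theta_t}_{\L_\infty}$ times its total variation yields
\[
\big\|\E[\theta_t\mid\G_{t-1}](\omega)-\theta^*\big\|_2
\le \norm{\theta_t}_{\L_\infty}\,\norm{\mu_\omega-\mu}_{\mathrm{TV}},
\]
while the definition of $\varphi$ gives $\norm{\mu_\omega-\mu}_{\mathrm{TV}}=2\sup_{B}\abs{(\mu_\omega-\mu)(B)}\le 2\,\varphi(\G_{t-1},\sigma(\theta_t))\le 2\varphi_1$ for $P$-a.e.\ $\omega$; the final inequality uses monotonicity of $\varphi$ in its second argument together with $\sigma(\theta_t)\subseteq\sigma(\theta_s:s\ge t)$, so that the pair $(\G_{t-1},\sigma(\theta_t))$ is dominated by the $j=t-1$, $m=1$ term defining $\varphi_1$. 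Combining the three displays and summing over $t=1,\dots,n$ delivers the claimed bound $2n\varphi_1\norm{\theta_t}_{\L_\infty}$.

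I expect the main obstacle to be this crux estimate: making rigorous the identification of $\varphi(\G_{t-1},\sigma(\theta_t))$ with the essential supremum over $\omega$ of the total variation between the conditional and unconditional laws of $\theta_t$, and ensuring the bound is genuinely dimension-free. The factor $2$, with no $d$ or $\sqrt d$, must emerge from the single total-variation step applied to the vector integral rather than from a coordinatewise covariance inequality, which would leak a dimensional factor. A secondary point to check is that the reduction $\sup A-\sup B\le\sup(A-B)$ combines correctly with the \emph{uniform}-in-$\boldsymbol\pi$ per-round bound, so that the outer supremum over the infinite policy class collapses to the deterministic quantity $2n\varphi_1\norm{\theta_t}_{\L_\infty}$.
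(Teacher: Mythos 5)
Your proof is correct, but it takes a genuinely different route from the paper's. The paper handles the term $\E\langle \theta_t-\theta^*,X_t\rangle$ by discretizing the action space: it invokes Vitali's covering theorem to cover $\A$ by disjoint $\epsilon$-balls carrying all of the mass of the pushforward law of $X_t$, applies the scalar inequality $\norm{\E(\langle\theta_t,\bar{x}_i\rangle\mid\F_{t-1})-\E\langle\theta_t,\bar{x}_i\rangle}_{\L_\infty}\leq 2\varphi_1\norm{\theta_t}_{\L_\infty}$ (Bradley, Thm.~4.4(c2)) at each fixed ball center $\bar{x}_i$, pays an extra $2\epsilon P(E_i)\norm{\theta_t}_{\L_\infty}$ for recentering, and lets $\epsilon\to 0$. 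You instead observe that $\F_{t-1}\subseteq\G_{t-1}:=\sigma(\theta_1,\dots,\theta_{t-1})$, pull the $\G_{t-1}$-measurable action out of the conditional expectation so that Cauchy--Schwarz with $\norm{X_t}_2\leq 1$ absorbs the action space, and then prove the vector-valued bound $\norm{\E(\theta_t\mid\G_{t-1})-\theta^*}_2\leq 2\varphi_1\norm{\theta_t}_{\L_\infty}$ a.s.\ directly from the total-variation reading of $\varphi$ applied to the regular conditional law of $\theta_t$. This bypasses the covering argument entirely; the vector inequality you establish is essentially the extension of Bradley's theorem that the paper itself asserts and uses later, in \eqref{eq:BRAVEC}, within the proof of Theorem~\ref{thm:regfh}, so your route is arguably more economical and self-contained. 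The two points you flag as needing care are exactly the right ones and both go through: the identification $\varphi(\mathcal U,\mathcal V)=\sup_{V\in\mathcal V}\norm{P(V\mid\mathcal U)-P(V)}_{\L_\infty}$ combined with a countable generating algebra for the Borel sets of $\R^d$ yields a single null set off which $\sup_B\abs{(\mu_\omega-\mu)(B)}\leq\varphi_1$, and the factor $2$ is dimension-free because it comes from $\abs{\nu}(\R^d)=2\sup_B\abs{\nu(B)}$ for the mean-zero signed measure $\nu=\mu_\omega-\mu$, not from any coordinatewise estimate. What the paper's approach buys in exchange for the extra covering machinery is that it only ever needs the scalar, off-the-shelf form of the mixing inequality; what yours buys is a shorter, $\epsilon$-free argument whose key lemma is reused elsewhere in the paper.
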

\begin{proof} 
Consider an arbitrary policy $\boldsymbol{\pi}=(X_t^{(\boldsymbol{\pi})},~t \in \N)$ and any $t \in \N$; note that from this point on the notation $X_t$ will be used to denote $X_t^{(\boldsymbol{\pi})}$. Let $\tilde{P}_{t}:=P \circ X_t^{-1}$ be the pushforward measure on the action space $\A$ under $X_t$. 
Fix an $\epsilon >0$. As follows from Vitali's covering theorem \cite[Theorem 2.8]{M99} there exists a set of disjoint closed balls $\{B_i: i \in \N \}$ of radii at most $\epsilon$, that covers the action space $\A$ (i.e. the unit ball in $\R^d$),  
such that $\tilde{P}_{t}(\A \setminus \cup_{i \in \N} B_i)=0$. 
Note that by assumption $\theta_t \in \L_{\infty}(\Omega; \R^d)$ so that $\norm{\theta_t}_{\L_\infty}:=\sup_{\omega \in \Omega} \norm{\theta_t(\omega)}_2 < \infty$.
Consider a ball $B_i$ for some $i \in \N$ and denote its center by $\bar{x}_i \in \R^d$. Let $E_i:=\{X_t \in B_i\}$ denote the pre-image of $B_i$ under $X_t$.
Since $B_i$ is of radius at most $\epsilon$, it holds that,
\begin{align}\label{eq:centering_theta_t}
\E \langle \theta_t,X_t -\bar{x}_i\rangle \one_{B_i}(X_t) \leq \E \sup_{x \in B_i} \langle \theta_t, x -\bar{x}_i\rangle \one_{B_i}(X_t) \leq  \int_{E_i} \epsilon  \norm{\theta_t}_2 dP = \epsilon P(E_i) \norm{\theta_t}_{\L_\infty}.
\end{align}
Similarly, for $\theta^* = \E\; \theta_t$ we have,
\begin{align}\label{eq:centering_theta*}
\E\langle \theta^*,X_t -\bar{x}_i\rangle \one_{B_i}(X_t) \leq \epsilon P(E_i)\norm{\theta_t}_{\L_\infty}.
\end{align} 
Moreover, noting that $\A$ is the unit ball in $\R^d$, by Cauchy-Schwarz inequality, for each $x \in \A$ we have,
\begin{equation}\label{eq:norm_theta_t}
\norm{\langle \theta_t, x \rangle }_{\L_{\infty}} =
\sup_{\omega \in \Omega}|\langle \theta_t(\omega),x \rangle | \leq \sup_{\omega \in \Omega} \norm{\theta_t(\omega)}_2 
=\norm{\theta_t}_{\L_{\infty}} <\infty.
\end{equation}
Hence, as follows from  \cite[Theorem 4.4(c2) - vol. pp. 124]{BRA07} it holds that
\begin{align}
\frac{\norm{\E (   \langle \theta_t, \bar{x}_i\rangle  | \F_{t-1}) -  \E \langle \theta_t,\bar{x}_i \rangle}_{\L_\infty}}{\norm{\langle \theta_t, \bar{x}_i\rangle }_{\L_{\infty}}} 
&\leq \sup_{Y \in \mathcal \L_{\infty}(\Omega,\sigma(\theta_t),P )} \frac{\norm{\E(Y|\F_{t-1})-\E Y}_{\L_\infty}}{\norm{Y}_{\L_\infty}}\\
& = 2\phi(\F_{t-1},\sigma(\theta_t)) \\
&\leq 2\phi_1  \label{eq:BRA_4.4}
\end{align}
where $\mathcal \L_{\infty}(\Omega,\sigma(\theta_t),P )$ denotes the set of all $\sigma(\theta_t)$-measurable, real-valued random variables such that $\sup_{\omega \in \Omega} |Y(\omega)| <\infty$ almost surely.
We have,
\begin{align}
\left |\E ( (\langle \theta_t, \bar{x}_i\rangle - \langle \theta^*,\bar{x}_i \rangle) \one_{B_i}(X_t)) \right|
&=\left |\E \left( \E (\langle \theta_t, \bar{x}_i\rangle  \one_{B_i}(X_t) | \F_{t-1})\right) - \E \langle \theta^*,\bar{x}_i \rangle \one_{B_i}(X_t) \right|\\
&=\left |\E \left(\one_{B_i}(X_t)  \E (\langle \theta_t, \bar{x}_i\rangle  | \F_{t-1})\right) -  \E \langle \theta^*,\bar{x}_i \rangle \one_{B_i}(X_t) \right| \label{eq:meas}\\
&\leq \E\left(\one_{B_i}(X_t)\left |\E (\langle \theta_t, \bar{x}_i\rangle  | \F_{t-1}) -  \E \langle \theta_t,\bar{x}_i \rangle \right| \right) \\
&=\int_{E_i}\left |\E (\langle \theta_t, \bar{x}_i\rangle  | \F_{t-1}) -  \E \langle \theta_t,\bar{x}_i \rangle \right| dP\\
&\leq \int_{E_i}\norm{\E (\langle \theta_t, \bar{x}_i\rangle  | \F_{t-1}) -  \E \langle \theta_t,\bar{x}t_i \rangle}_{\L_\infty}  dP\\
&=P(E_i)\norm{\E (\langle \theta_t, \bar{x}_i\rangle  | \F_{t-1}) -  \E \langle \theta_t,\bar{x}_i \rangle}_{\L_\infty} \\
&\leq 2\phi_1 P(E_i) \norm{\theta_t}_{\L_{\infty}}\label{eq:expected_diff_phi1}
\end{align}
where \eqref{eq:meas} follows from noting that $X_t$ is $\F_{t-1}$-measurable, and \eqref{eq:expected_diff_phi1} follows from \eqref{eq:norm_theta_t} and \eqref{eq:BRA_4.4}.
Thus,
\begin{align}
&\left |\E\;\left ((\langle \theta_t,X_t \rangle- \langle \theta^*, X_t\rangle)  \one_{B_i}(X_t) \right)\right|\\
&\leq \left |\E\;\langle \theta_t,X_t -\bar{x}_i\rangle \one_{B_i}(X_t) \right | + 
\left |\E\;\langle \theta^*,X_t -\bar{x}_i\rangle \one_{B_i}(X_t) \right |
+\left |\E  (\langle \theta_t, \bar{x}_i\rangle - \langle \theta^*,\bar{x}_i \rangle) \one_{B_i}(X_t) \right|\\
&\leq 2\epsilon P(E_i) \norm{\theta_t}_{\L_{\infty}}+2\phi_1 P(E_i)\norm{ \theta_t}_{\L_{\infty}}\label{eq:expected_diff_Bi}
\end{align}
where \eqref{eq:expected_diff_Bi} follows from \eqref{eq:centering_theta_t}, \eqref{eq:centering_theta*} and  \eqref{eq:expected_diff_phi1}. By \eqref{eq:expected_diff_Bi}, noting that the events $E_i,~i \in \N$ partition $\Omega$,
and by applying the dominated convergence theorem, we have,
\begin{align}
\left |\E\;(\langle \theta_t,X_t \rangle- \langle \theta^*, X_t\rangle)\right| 
&\leq 
\sum_{i\in \N} \left |\E\;((\langle \theta_t,X_t \rangle- \langle \theta^*, X_t\rangle)  \one_{B_i}(X_t))\right|\\
& \leq \sum_{i \in \N} 2P(E_i)(\epsilon+\phi_1) \norm{ \theta_t}_{\L_{\infty}}
=2(\epsilon+\phi_1) \norm{ \theta_t}_{\L_{\infty}}.
\end{align}
Finally, since this holds for every $\epsilon$, we obtain,
\begin{align}
\nu_n - \tilde{\nu}_n \leq  \sup_{\boldsymbol{\pi} \in \Pi}
\sum_{t=1}^n |\E (\langle \theta_t, X_t \rangle - \langle \theta^*, X_t \rangle)| 
\leq 2n \phi_1 \norm{ \theta_t}_{\L_{\infty}}
\end{align}
and the result follows. 
\end{proof}
Denote by $\mathcal{R}_{\boldsymbol{\pi}}(n)$ the regret with respect to \eqref{eq:tildenu} incurred by a policy $\boldsymbol{\pi}=(X_t^{(\boldsymbol{\pi})},~t \in \N)$ after $n$ rounds, i.e. 
\begin{equation}\label{eq:defn:regret}
\mathcal{R}_{\boldsymbol{\pi}}(n) := \tilde{\nu}_n - \sum_{t=1}^n \E \langle \theta_t, X^{(\boldsymbol{\pi})}_t \rangle.
\end{equation}
 We propose LinMix-UCB, outlined in Algorithm~\ref{alg:fh} below, which aims to minimize the regret in the case where the process $(\theta_t,~ t \in \N)$ has an exponential mixing rate, so that there exists some $a, \gamma \in (0,\infty)$ such that  $\varphi_m \leq a e^{-\gamma m}$ for all $m \in \N$.  
\begin{algorithm}
\caption{LinMix-UCB (finite horizon)}\label{alg:fh}
\begin{algorithmic}
\Require horizon $n$; regularization parameter $\lambda$; $\varphi$-mixing rate parameters: $a,\gamma \in (0,\infty)$ 
\State~
\State{Specify block-length $k$ given by \eqref{eq:kn}}
\State~
\For{$m=0,1,2,\dots,\lfloor n/k \rfloor$}
    \For{$\ell=1,2,\dots,k$}
        \State $t \gets mk+\ell$
        \If {$m=0$}
            \State $X_t \gets \x_0$ \Comment{{\em $\x_0$ is a fixed unit vector in $\A$}}
        \Else
            \State $X_t \gets \argmax_{x \in \A} \max_{\theta \in C_{\max\{0,m-1\}}} \langle x,\theta \rangle $
        \EndIf
        \State Play action $X_t$ to obtain reward $Y_t$
        \If {$\ell=1$}
            \State Calculate confidence ellipsoid $C_m$ given by \eqref{eq:cm}  \Comment{{\em to be used at least $k$ steps later}}
        \EndIf
	\EndFor
\EndFor
\end{algorithmic}
\end{algorithm}

 The proposed algorithm is inspired by such UCB-type approaches as LinUCB and its variants, including LinRel \cite{A02} and OFUL
 \cite{ADC11}, all of which are 
 designed for linear bandits in the simpler iid noise setting, see \cite[Chapter 19]{LS20} and references therein. The main challenge in our setting is to devise an exploration-exploitation strategy that is robust against the long-range dependencies in the process $(\theta_t,~t \in \N)$. We construct confidence ellipsoids around the empirical estimates of $\theta^*$ obtained via ``near-independent'' samples, and similarly to LinUCB and its variants, we rely on the principle of Optimism in the Face of Uncertainty. 
More specifically, the algorithm works as follows. 
 A finite horizon $n$ is divided into intervals of length 
\begin{equation}\label{eq:kn}
k=\max\left \{ 1, \left \lceil \frac{1}{\gamma}\log\left( \frac{6 a \gamma n^{2} }{1+ 4\sqrt{n} \norm{\theta_t}_{\L_{\infty}} + \sqrt{\frac{8 d\times n \log(n(1+\frac{n}{\lambda d}))}{\lambda}}} \right) \right \rceil \right \}
\end{equation}
where $\lambda$ is a regularization parameter used in the estimation step. 
At every time-step $t=mk+1,~m=0,1,\dots,\lfloor n/k \rfloor$ which marks the beginning of each interval of length $k$, the pay-offs $Y_s:=\langle \theta_s,X_s \rangle$ collected every $k$-steps at $s=m'k+1,~m'=0,1,\dots,\max\{0,m-1\} $, are used to generate a regularized least-squares estimator $\wh{\theta}_m$ of $\theta^*=\E\theta_t$, and in turn, produce a confidence ellipsoid $C_m$. That is, for each $m=0,1,\dots,\lfloor n/k \rfloor$ we have
\begin{align}\label{eq:thetahat}
\wh{\theta}_m:=\argmin_{\theta \in \Theta}\left ( \sum_{\substack{m'=0\\ s=m'k+1}}^{\max\{0,m-1\}}(Y_{s}-\langle \theta, X_s\rangle )^2 + \lambda \norm{\theta}_2^2\right)
\end{align}
where the regularisation parameter $\lambda >0$ ensures a unique solution given by 
\begin{align}
    \wh{\theta}_m=(\lambda I + V_m)^{-1}\sum_{\substack{m'=0\\ s=m'k+1}}^{\max\{0,m-1\}} Y_s X_s 
\end{align}
with $\displaystyle{V_m :=  \sum_{\substack{m'=0\\ s=m'k+1}}^{\max\{0,m-1\}} X_s X_s^{\top}}$ takes values in $\R^{d\times d}$ and $I$ is the identity matrix in $\R^{d \times d}$. This gives rise to the following confidence ellipsoid
\begin{align}\label{eq:cm}
C_m:= \left\{ \theta \in \Theta: \norm{\theta-\wh{\theta}_{m}}_{\zeta^2(\lambda I + V_{m})}^2 \leq b_n\right \}
\end{align}
where $\norm{x}_{A}^2:= x^{\top} A x$ for $x \in \R^d$ and $A \in \R^{d \times d}$ 
and $b_n$ is chosen such that 
\begin{align}\label{eq:bnb}
\sqrt{b_n} 
&=2\sqrt{\lambda}\norm{\theta_t}_{\L_{\infty}}+\sqrt{2\log n+ d\log\left(1+\frac{n}{k_n \lambda d}\right )}.
\end{align}
We are now in a position to analyze the regret of the proposed algorithm. 
\begin{theorem}\label{thm:regfh}
Suppose that the stationary $\varphi$-mixing process $(\theta_t,~ t \in \N)$ has an exponential mixing rate, so that there exists some $a, \gamma \in (0,\infty)$ such that  $\varphi_m \leq a e^{-\gamma m}$ for all $m \in \N$. 
The regret (with respect to $\tilde{\nu}_n)$ of LinMix-UCB (finite horizon) played for $n \geq \left \lceil \frac{3 a \gamma \sqrt{\lambda}}{2 \sqrt{\lambda} \norm{\theta_t}_{\L_{\infty}}+\sqrt{2}} \right \rceil $ rounds satisfies
\begin{align}
\frac{\mathcal R_{{\boldsymbol{\pi}}}(n) }{\norm{\theta_t}_{\L_{\infty}}}
&\leq \frac{1}{n}+C \log (n) \sqrt{2d n\log(n(1+\frac{n}{\lambda d}))} 
\end{align}
where 
$C:= \left (\frac{12(\sqrt{2\lambda}+4\sqrt{2\lambda}\norm{\theta_t}_{\L_{\infty}}+1)}{\gamma\sqrt{2\lambda} }\right) $
$\lambda >0$ is the regularization parameter. 
\end{theorem}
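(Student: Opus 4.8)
The plan is to benchmark against the single fixed comparator $x^{*}:=\theta^{*}/\norm{\theta^{*}}_2$, for which the relaxed oracle \eqref{eq:tildenu} evaluates to $\tilde\nu_n=n\norm{\theta^{*}}_2$ (the objective in \eqref{eq:tildenu} decouples over $t$ and $\sup_{x\in\A}\langle\theta^{*},x\rangle=\norm{\theta^{*}}_2$; the degenerate case $\theta^{*}=0$ is trivial). I would then split the regret \eqref{eq:defn:regret} as
\begin{align*}
\mathcal R_{\boldsymbol{\pi}}(n)
=\underbrace{\sum_{t=1}^{n}\big(\norm{\theta^{*}}_2-\E\langle\theta^{*},X_t\rangle\big)}_{(\mathrm I)}
+\underbrace{\sum_{t=1}^{n}\E\langle\theta^{*}-\theta_t,X_t\rangle}_{(\mathrm{II})},
\end{align*}
so that $(\mathrm I)$ is a pseudo-regret driven by the confidence sets and $(\mathrm{II})$ is a bias created purely by the temporal dependence. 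The term $(\mathrm{II})$ I would control by re-running the estimate in the proof of Proposition~\ref{prop:oraclediff}, but with a crucial sharpening: inside block $m$ the action $X_t$ returned by LinMix-UCB is a function of $C_{m-1}$ only, hence $\F_{(m-1)k+1}$-measurable, and is therefore separated from $\theta_t$ by at least $k$ time steps. The Vitali-covering/mixing argument then yields $\abs{\E\langle\theta^{*}-\theta_t,X_t\rangle}\le 2\varphi_k\norm{\theta_t}_{\L_\infty}\le 2ae^{-\gamma k}\norm{\theta_t}_{\L_\infty}$ for every $t$ in blocks $m\ge 1$, while block $m=0$ plays the deterministic $\x_0$ and contributes no bias; summing gives $(\mathrm{II})\le 2nae^{-\gamma k}\norm{\theta_t}_{\L_\infty}$.

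For the exploitation term $(\mathrm I)$ I would use the Optimism-in-the-Face-of-Uncertainty template conditionally on the good event $G:=\{\theta^{*}\in C_m\text{ for all }0\le m\le\lfloor n/k\rfloor\}$. Writing $\tilde X_m$ for the (constant) action played throughout block $m$ and $\tilde\theta_m\in C_{m-1}$ for the optimistic maximiser, on $G$ optimism gives $\norm{\theta^{*}}_2=\langle\theta^{*},x^{*}\rangle\le\langle\tilde\theta_m,\tilde X_m\rangle$, so the per-step pseudo-regret in block $m$ is at most $\langle\tilde\theta_m-\theta^{*},\tilde X_m\rangle\le 2\sqrt{b_n}\,\norm{\tilde X_m}_{(\lambda I+V_{m-1})^{-1}}$, by Cauchy--Schwarz in the $(\lambda I+V_{m-1})$-norm together with the fact that both $\tilde\theta_m$ and $\theta^{*}$ lie in $C_{m-1}$ (up to the fixed scale in \eqref{eq:cm}). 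Since each block repeats its action $k$ times but contributes a single rank-one update to $V_m$, I would sum over the $k$ steps of a block and then over the $N:=\lfloor n/k\rfloor$ blocks, apply Cauchy--Schwarz across blocks, and invoke the elliptical-potential (determinant) lemma of \cite{ADC11} to bound $\sum_{m}\norm{\tilde X_m}_{(\lambda I+V_{m-1})^{-1}}^{2}\le 2d\log(1+N/(\lambda d))$. This produces a term of order $k\sqrt{b_n}\sqrt{N}\sqrt{d\log(\cdot)}=\sqrt{b_n}\sqrt{kn}\sqrt{d\log(\cdot)}$; substituting $k=\Theta(\gamma^{-1}\log n)$ from \eqref{eq:kn} and $\sqrt{b_n}$ from \eqref{eq:bnb} yields the dominant $C\log(n)\sqrt{2dn\log(n(1+n/(\lambda d)))}$ term, the deterministic block-$0$ cost $2k\norm{\theta_t}_{\L_\infty}$ being of lower order.

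The \emph{main obstacle} is the validity of the confidence sets, i.e. showing $P(G)$ is close to $1$; this is the only place where the dependence genuinely bites. The block-head residuals $\eta_s=\langle\theta_s-\theta^{*},X_s\rangle$ at $s=m'k+1$ are neither independent nor a martingale-difference sequence, so the self-normalised bound underlying the iid OFUL analysis does not apply directly. My plan is to apply Berbee's coupling lemma \cite{BER79} to the $k$-spaced subsequence $(\theta_{m'k+1})_{m'}$, constructing an independent sequence with identical marginals that coincides with it off an event of probability at most $N\varphi_k$. On the coupling-success event the trajectory of the algorithm is unchanged and the residuals become (conditionally) independent and bounded by $2\norm{\theta_t}_{\L_\infty}$, so a standard self-normalised tail inequality yields $\norm{\wh\theta_m-\theta^{*}}_{\lambda I+V_m}^{2}\le b_n$ uniformly in $m$ with probability at least $1-1/n$, with $b_n$ exactly as in \eqref{eq:bnb}. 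The delicate points are carrying out the coupling without disturbing the $\F_{(m-1)k+1}$-measurability that makes the residuals usable, and charging the coupling- and confidence-failure events to the regret only through the trivial bound $2n\norm{\theta_t}_{\L_\infty}$ times their (tiny) probabilities.

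Finally I would assemble the pieces: on $G$ intersected with coupling success the pseudo-regret obeys the elliptical bound of the second paragraph; off it the regret is at most $2n\norm{\theta_t}_{\L_\infty}$ times a probability of order $N\varphi_k+1/n$; and $(\mathrm{II})\le 2nae^{-\gamma k}\norm{\theta_t}_{\L_\infty}$. The block length \eqref{eq:kn} is calibrated precisely so that all of these residual contributions collapse into the single $\tfrac1n\norm{\theta_t}_{\L_\infty}$ term of the statement, leaving the elliptical term as the dominant one. What remains is the routine bookkeeping of constants—tracking how $\sqrt{b_n}$, $\sqrt{k}$ and the elliptical-potential constant combine into $C$—and checking that the stated horizon threshold guarantees $k\ge 1$ and that the logarithms in \eqref{eq:kn} and \eqref{eq:bnb} are well defined.
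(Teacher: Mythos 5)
Your overall architecture overlaps substantially with the paper's: block heads spaced $k$ apart, Berbee's lemma to replace them by near-independent copies, a $\varphi_k$-type bound for the within-block bias, optimism with block-level confidence sets, and $k=\Theta(\gamma^{-1}\log n)$ chosen so that all residual terms collapse to $O(1/n)$. The structural differences are real, though. The paper does not split the regret into your $(\mathrm I)+(\mathrm{II})$; it builds an entire ``ghost'' process $(\bar\theta_t)$ (independent block heads via Berbee, independent copies of $\theta_1$ inside blocks), runs the \emph{same} algorithm on it to obtain a ghost policy $\bar{\boldsymbol{\pi}}$, proves $|\mathcal R_{\boldsymbol{\pi}}(n)-\mathcal R_{\bar{\boldsymbol{\pi}}}(n)|\le 6n^2\phi_k\norm{\theta_t}_{\L_{\infty}}$, and then analyses $\bar{\boldsymbol{\pi}}$, for which the block-head noise is genuinely a bounded martingale-difference sequence (method of mixtures, supermartingale, Doob's maximal inequality). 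This ghost-policy transfer is precisely what repairs the step you yourself flag as delicate: the phrase ``on the coupling-success event the residuals become (conditionally) independent'' cannot be fed into a self-normalised tail bound, because conditioning on the success event changes the joint law of the residuals. The fix is to prove the concentration \emph{unconditionally} for the coupled process and only afterwards pay the discrepancy probability $P(E_n)\le n\beta_k/k$ --- i.e., exactly the construction you would have to add, and the part of the argument that carries most of the work.

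The second divergence is quantitative. Your term $(\mathrm I)$ invokes the elliptical potential lemma, which the paper deliberately does not use: it bounds $\norm{\bar X_{mk+1}}_{(\lambda I+V_{m-1})^{-1}}\le\lambda^{-1/2}$ crudely and pays $\zeta\sqrt{b_n^{\delta}/\lambda}$ per block head, so the only source of a $d$-factor is the single $\sqrt{b_n}$. Your route gives $\sum_m\norm{\tilde X_m}^2_{(\lambda I+V_{m-1})^{-1}}\le 2d\log(1+N/(\lambda d))$ and hence a dominant term of order $\sqrt{b_n}\cdot\sqrt{kn}\cdot\sqrt{d\log(\cdot)}\sim d\sqrt{kn}\,\polylog(n)$, i.e.\ \emph{linear} in $d$, whereas the statement you are proving claims $\O(\sqrt{d\,n\polylog(n)})$. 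So as written your plan establishes a weaker dimension dependence and will not reproduce the stated constant $C$. (Whether the paper's own $\sqrt{d}$ rate survives scrutiny is a separate matter --- its passage from $\sum_m r_{mk+1}\le\sqrt{\tfrac{n}{k}\sum_m r^2_{mk+1}}$ to $2\norm{\theta_t}_{\L_{\infty}}\sqrt{nb_n^{\delta}/(k\lambda)}$ appears to drop a $\sqrt{n/k}$ factor --- but if the target is the stated bound, the elliptical-potential step is where your constants part ways with it.) The remaining ingredients of your plan --- the $\varphi_k$ bound on the bias via the $k$-step separation of $X_t$ from $\theta_t$, the treatment of block $0$, and the calibration of $k$ --- match the paper.
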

\begin{proof}
For a fixed $k \in \N$, consider the sub-sequence $\theta_{ik+1},~i=0,1,2,\dots$ of the stationary sequence of $\R^d$-valued random variables $\theta_t,~t \in \N$. 
Let $U_j,~j \in \N$ be a sequence of iid random variables uniformly distributed over $[0,1]$ such that each $U_j$ is independent of $\sigma(\{\theta_t: t \in {\N}\})$.
Set $\tilde{\theta}_0=\theta_1$. 
As follows from Berbee's coupling lemma \cite{BER79} - see also \cite[Lemma 5.1, pp. 89]{RIO17} -  there exists a random variable $\tilde{\theta}_1 = g_1(\tilde{\theta}_0,\theta_{k+1},U_1)$ where $g_1$ is a measurable function from $\R^d \times \R^d \times [0,1]$ to $\R^d$
such that $\tilde{\theta}_1$ is independent of $\tilde{\theta}_0$, has the same distribution as $\theta_{k+1}$ and 
\[\Pr(\tilde{\theta}_1 \neq \theta_{k+1}) = \beta(\sigma(\tilde{\theta}_0),\sigma(\theta_{k+1})).\]
Similarly, there exists a random variable 
$\tilde{\theta}_2 = g_2((\tilde{\theta}_0,\tilde{\theta}_1),\theta_{2k+1},U_2)$ where $g_2$ is a measurable function from $(\R^{d})^2 \times \R^{d} \times [0,1]$ to $ \R^{d}$ such that $\tilde{\theta}_2$ is independent of $(\tilde{\theta}_0,\tilde{\theta}_1)$, has the same distribution as $\theta_{2k+1}$ and 
$
    \Pr(\tilde{\theta}_2 \neq \theta_{2k+1}) = \beta(\sigma(\tilde{\theta}_0,\tilde{\theta}_1),\sigma(\theta_{2k+1}))
$.
Continuing inductively in this way, at each step $j=3,4,\dots$, by Berbee's coupling lemma \cite{BER79}, there exists a random variable 
$\tilde{\theta}_j = g_j((\tilde{\theta}_0,\tilde{\theta}_1,\dots,\tilde{\theta}_{j-1}),\theta_j,U_j)$ where $g_j$ is a measurable function from $(\R^{d})^j \times \R^{d} \times [0,1]$ to $ \R^{d}$ such that $\tilde{\theta}_j$ is independent of $(\tilde{\theta}_0,\tilde{\theta}_1,\dots,\tilde{\theta}_{j-1})$, has the same distribution as $\theta_j$ and that 

\begin{equation}\label{eq:berb1}
    \Pr(\tilde{\theta}_j \neq \theta_j) = \beta(\sigma(\tilde{\theta}_0,\tilde{\theta}_1,\dots,\tilde{\theta}_{j-1}),\sigma(\theta_j)).
\end{equation}
Following a standard argument (see, e.g.~\cite[Lemma~6]{GK24}) it can be shown that,
\begin{equation}\label{eq:berb2}
\beta(\sigma(\tilde{\theta}_0,\tilde{\theta}_1,\dots,\tilde{\theta}_{j-1}),\sigma(\theta_j))\leq \beta_k
\end{equation}
for all $j \in \N $.
The sequence of random variables $\tilde{\theta}_j,~j=0,1,2,\dots$ can be used to construct a ``ghost'' pay-off process $(\bar{\theta}_t,~t \in \N)$ as follows. Let $\bar{\theta}_{ik+1} = \tilde{\theta}_{i}$ for all $i =0,1,2,\dots$ and take $\bar{\theta}_t$ to be an independent copy of $\theta_1$ for all $t =ik+2,\dots,(i+1)k,~i=0,1,2,\dots$.  
Denote by $\boldsymbol{\pi}:=(X_t,~t \in \N)$ the policy induced by Algorithm~\ref{alg:fh} when the process $(\theta_t,~t \in \N)$ is used to generate the pay-offs $Y_t := \langle X_t,\theta_t\rangle$. Similarly let $\boldsymbol{\bar{\pi}}:=(\bar{X}_t,~t \in \N)$ be the policy generated by Algorithm~\ref{alg:fh} when the sequence $(\bar{\theta}_t,~t \in \N)$ produces the pay-offs $\bar{Y}_t := \langle \bar{X}_t,\bar{\theta}_t\rangle$ at each $t \in \N$. 
For a fixed $n \in \N$, define the event 
\begin{equation}\label{eq:En}
E_n :=\{\exists i \in 0,1,\dots,(\lfloor n/k \rfloor)-1: \theta_{ik+1} \neq \bar{\theta}_{ik+1}\}
\end{equation}
and observe that as follows from the above coupling argument, i.e. by \eqref{eq:berb1} and \eqref{eq:berb2}, it holds that
\begin{equation}\label{eq:Enmax}
\Pr(E_n) \leq n\beta_k/k.
\end{equation}
Let $\G_0 = \bar{\G}_0 := \{\emptyset, \Omega\}$.
Define the filtrations $\mathcal G_{m}:=\sigma(\{\theta_{ik+1}:i=0,1,\dots,m-1\})$ and 
$\bar{\G}_m:=\sigma(\{\bar{\theta}_{ik+1}: i=0,1,\dots,m-1\})$ for $m=1,2,\dots,\lfloor n/k \rfloor$. By design, for $t=1,\dots,k$, the action $X_t$ is set to a pre-specified constant $\x_0 \in \A$ (independent of the data), and is thus simply $\G_0$-measurable. Observe that the first confidence ellipsoid $C_0$ which is generated at $t=1$ is not used directly, but only after $k$ steps.
For each time-step $t=mk+\ell$ with $m \in 1,\dots,\lfloor n/k \rfloor$ and $\ell=1,2,\dots,k$, the action $X_t$ depends on the confidence ellipsoid $C_{\max\{0,m-1\}}$, which is in turn updated at least $k$ time-steps prior to $t$. More specifically, $X_t$ is measurable with respect to $\G_{m}$. As a result, there exists a measurable function $f_t: \underset{\text{$m$ times}}{\underbrace{ \R^d \times \dots \times \R^d}}  \to \A$ such that $X_t=f_t(\theta_1,\theta_{k+1},\theta_{2k+1},\dots,\theta_{(m-1) k + 1})$; in words, $f_t$ is a mathematical representation of Algorithm~\ref{alg:fh} at time $t$. Similarly, noting that the same algorithm is applied to $(\bar{\theta}_t,,~t \in \N)$, it holds that $\bar{X}_t={f}_t(\bar{\theta}_1,\bar{\theta}_{k+1},\bar{\theta}_{2k+1},\dots,\bar{\theta}_{(m-1) k + 1})$. As a result, 
for each $i =1,\dots,\lfloor n/k \rfloor$, we have,
\begin{align}
Y_{ik+1}\one_{E_n^c} &= \langle \theta_{ik+1}, X_{ik+1} \rangle \one_{E_n^c} \\
&= \langle \theta_{ik+1}, f_{ik+1}(\theta_1,\dots, \theta_{(i-1)k+1}) \rangle \one_{E_n^c} \\
&= 
\langle \bar{\theta}_{ik+1}, {f}_{ik+1}(\bar{\theta}_1,\dots, \bar{\theta}_{(i-1)k+1}) \rangle \one_{E_n^c}\\
&= \bar{Y}_{ik+1}\one_{E_n^c}, \label{eq:YandYbarequal}
\end{align}
almost surely. 
On the other hand, by a simple application of Cauchy-Schwarz and Hölder inequalities, it is straightforward to verify that for each $t = 1,\dots, n$ we have,
\begin{align}\label{eq:expectedY_holder1}
    \E |Y_{t} \one_{E_n}| 
    = \int_{E_n} |\langle \theta_{t},X_{t}\rangle|  dP  \leq  P(E_n)\norm{\theta_t}_{\L_{\infty}} 
\end{align}
and similarly,
\begin{align}\label{eq:expectedY_holder2}
\E |\bar{Y}_{t} \one_{E_n}| \leq P(E_n)\norm{{\theta}_t}_{\L_{\infty}}.
\end{align}
It follows that
\begin{align}
\sum_{i=0}^{\lfloor n/k \rfloor}\left |\E Y_{ik+1} - \E \bar{Y}_{ik+1}\right | 
    &=\sum_{i=0}^{\lfloor n/k \rfloor}\left |\E (Y_{ik+1} \one_{E_n}) - \E({Y}_{ik+1} \one_{E_n}) \right|\label{eq:enc}\\
    &\leq 2 n \norm{\theta_t}_{\L_{\infty}} P(E_n)/k\label{eq:cauchy}\\
    &\leq 2n^2\norm{\theta_t}_{\L_{\infty}} \beta_k/k^2\label{eq:regb1}
\end{align}
where \eqref{eq:enc} follows from \eqref{eq:YandYbarequal},  \eqref{eq:cauchy} follows from \eqref{eq:expectedY_holder1} and \eqref{eq:expectedY_holder2}, and \eqref{eq:regb1} is due to \eqref{eq:Enmax}.
Next, let us consider the time-steps within each segment. 
Fix any $t= m k+ \ell$ for some $m \in 1,\dots,\lfloor n/k \rfloor$ and some $\ell \in 2,\dots,k$. 
It is straightforward to verify that \cite[Theorem 4.4(c2) - vol.1 pp. 124]{BRA07} can be extended to the case of vector-valued random variables, by an analogous argument based on simple functions where absolute values of constants are replaced by norms. 
This leads to,
\begin{align}\label{eq:BRAVEC}
\norm{\E(\theta_t|\bar{\G}_{m} \vee \G_{m}) - \E \theta_t}_{\L_{\infty}} \leq 2\phi(\bar{\G}_{m} \vee\G_{m},\sigma(\theta_t))\norm{\theta_t}_{\L_{\infty}} \leq 
2\phi_{k}\norm{\theta_t}_{\L_{\infty}}.
\end{align}
Define the event $U_{m}:=\{\exists i \in 0,1,\dots,m-1: \theta_{ik+1} \neq \bar{\theta}_{ik+1}\}$. 
Observe that as with \eqref{eq:En} 
we have
\begin{equation}\label{eq:PUm}
\Pr(U_m) \leq m\beta_k
\end{equation}
so that similarly to \eqref{eq:expectedY_holder1}, it holds that,
\begin{align}\label{eq:expectedY_holder_um}
    \max\{\E |Y_{t} \one_{U_m}|, 
    \E |\bar{Y}_{t} \one_{U_m}|\} \leq m\beta_k\norm{{\theta}_t}_{\L_{\infty}}.
\end{align}
for $t= m k+ \ell$ with some $m \in 1,\dots,\lfloor n/k \rfloor$ and $\ell \in 2,\dots,k$ fixed above. 
Moreover, 
\begin{align}
\E (\langle \bar{\theta}_t,\bar{X}_t \rangle \one_{U_m^c}) 
&=\E(\one_{U_m^c}\E (\langle \bar{\theta}_t,X_t\rangle|\bar{\G}_{m } \vee \G_{m})) \label{eq:Umc1}\\
&=\E(\one_{U_m^c}\langle X_t, \E(\bar{\theta}_t|\bar{\G}_{m } \vee \G_{m} )\rangle) \label{eq:Umc2}\\
&=\E (\one_{U_m^c} \langle X_t, \E\;\theta_t \rangle)\label{eq:Umc3}
\end{align}
where \eqref{eq:Umc1} follows from the definition of $U_m^c$ and the fact that $\one_{U_m^c}$ is measurable with respect to  $\bar{\G}_{m } \vee \G_{m}$, and \eqref{eq:Umc2} follows from noting that $X_t$ is measurable with respect $\G_t$. Finally, \eqref{eq:Umc3} is due to the stationarity of $\theta_t$ together with the fact that by construction $\bar{\theta}_t$ is an independent copy of $\theta_1$ for this choice of $t$ (within the segments).
Similarly, we have,
\begin{align}
\E (\langle {\theta}_t,{X}_t \rangle \one_{U_m^c}) 
&=\E(\one_{U_m^c}\E (\langle {\theta}_t,{X}_t \rangle |\bar{\G}_{m} \vee \G_{m})) \label{eq:Umc4}\\
&=\E(\one_{U_m^c}\langle X_t,\E(\theta_t|\bar{\G}_{m} \vee \G_{m}) \rangle).\label{eq:Umc5}
\end{align}
Hence, for any $m \in 1,\dots,\lfloor n/k \rfloor$ and  $t= m k+ \ell$ for some $\ell \in 2,\dots,k$ we have,
\begin{align}
|\E ((\langle \theta_t, X_t \rangle - \E \langle \bar{\theta}_t,\bar{X}_t\rangle)\one_{U_m^c})|
&\leq \E (\one_{U_m^c} \langle X_t, |\E(\theta_t|\bar{\G}_{m} \vee \G_{m}) - \E\;\theta_t| \rangle))\label{eq:Umcdiff1}\\
&=\int_{U_m^c} \langle X_t, |\E(\theta_t|\bar{\G}_{m} \vee \G_{m}) - \E\;\theta_t| \rangle dP \label{eq:Umcdiff2}\\
&\leq \int_{U_m^c }\norm{X_t}_2 \norm{\E(\theta_t|\bar{\G}_{m} \vee \G_{m}) - \E\;\theta_t}_2 dP \label{eq:Umcdiff3}\\
&\leq P(U_m^c)\norm{ \E(\theta_t|\bar{\G}_{m} \vee \G_{m}) - \E\;\theta_t}_{\L_{\infty}}\label{eq:Umcdiff4}\\
&\leq 2 \phi_{k} \norm{\theta_t}_{\L_{\infty}}\label{eq:Umcdiff5}
\end{align}
where \eqref{eq:Umcdiff1} follows from \eqref{eq:Umc3} and \eqref{eq:Umc5}; \eqref{eq:Umcdiff3} and \eqref{eq:Umcdiff4} follow from Cauchy-Schwarz and Hölder inequalities respectively, and \eqref{eq:Umcdiff5} follows from \eqref{eq:BRAVEC}. 
We obtain,
\begin{align}
\sum_{m=1}^{\lfloor n/k \rfloor}\sum_{\ell=2}^k&|\E (Y_{mk+\ell} - \bar{Y}_{mk+\ell})| \nonumber \\
&=
\sum_{m=0}^{\lfloor n/k \rfloor}\sum_{\ell=2}^k|\E ((Y_{mk+\ell} - \bar{Y}_{mk+\ell})\one_{U_m^c})| + |\E ((Y_{mk+\ell} - \bar{Y}_{mk+\ell})\one_{U_m})|\label{eq:mkdiff0}\\
&\leq  2 n \phi_{k} \norm{\theta_t}_{\L_{\infty}} + 2 n^2 \beta_k \norm{\theta_t}_{\L_{\infty}}
\label{eq:mkdiff1}\\
&\leq 4n^2\phi_k \norm{\theta_t}_{\L_{\infty}}\label{eq:mkdiff2}
\end{align}
where \eqref{eq:mkdiff0} follows from noting that by design, $\E\bar{Y}_t = \E Y_t$ for all $t \in 1,\dots,k$ as the algorithm sets $X_t = \x_0$ for some constant $\x_0 \in \A$ independent of the data, \eqref{eq:mkdiff1} follows from \eqref{eq:expectedY_holder_um} and \eqref{eq:Umcdiff5},  and \eqref{eq:mkdiff2} is due to the fact that in general $\beta_k \leq \phi_k$ for all $k \in \N$. Therefore, by \eqref{eq:regb1} and \eqref{eq:mkdiff2} we obtain
\begin{align}
\left |\mathcal{R}_{\boldsymbol{\pi}(n)}-\mathcal{R}_{\bar{\boldsymbol{\pi}}}(n)\right|
    &\leq \sum_{i=0}^{\lfloor n/k \rfloor}\left |\E Y_{ik+1} - \E \bar{Y}_{ik+1}\right |  + \sum_{m=0}^{\lfloor n/k \rfloor}\sum_{\ell=2}^{k}  \left |\E Y_{mk+\ell} - \E \bar{Y}_{mk+\ell}\right | \nonumber \\
    &\leq 6n^2\phi_k \norm{\theta_t}_{\L_{\infty}}.\label{eq:linkingregrets}
\end{align}
It remains to calculate the regret of $\bar{\boldsymbol{\pi}}=\{\bar{X}_t:t\in 1,\dots,n\}$. 
The pay-off $\bar{Y}_t= \langle \bar{\theta}_t, \bar{X}_t \rangle$ obtained via the policy $\bar{\boldsymbol{\pi}}$ at time-step $t$ can be decomposed as 
\begin{align}
&\bar{Y}_t =\langle \theta^*, \bar{X}_t \rangle +\eta_t&\qquad&\text{where}& \eta_t:=\langle \bar{\theta}_t-\theta^*, \bar{X}_t \rangle.
\end{align} 
For each $m =0,1,\dots, \lfloor n/k \rfloor$, set $S_m = \sum_{i=0}^m \eta_{ik+1}\bar{X}_{ik+1}$, define $\displaystyle{V_m :=  \sum_{\substack{m'=0\\ s=m'k+1}}^{m} \bar{X}_s \bar{X}_s^{\top}}$ taking values in $\R^{d\times d}$, and let $I \in \R^{d\times d}$ be the identity matrix.  
Consider the estimator $\doublehat{\theta}_m:=(\lambda I + V_m)^{-1}\sum_{\substack{m'=0\\ s=m'k+1}}^{m} \bar{Y}_s \bar{X}_s$, and observe that 
\begin{align}
    \doublehat{\theta}_m&=(\lambda I + V_m)^{-1}\left (S_m+\sum_{\substack{m'=0\\ s=m'k+1}}^{m}  \bar{X}_s \bar{X}_s^{\top} \theta^*\right)\\
    &=(\lambda I + V_m)^{-1}\left (S_m+V_m \theta^*\right).
\end{align}
Let $\zeta:=2\norm{\theta_t}_{\L_{\infty}}$. We can write,
\begin{align}
    \norm{\doublehat{\theta}_m - \theta^*}_{\zeta^2(\lambda I + V_m)} 
    &=\norm{(\lambda I + V_m)^{-1}\left (S_m+V_m \theta^*\right) - \theta^*}_{\zeta^2(\lambda I + V_m)}\\
    &= \norm{(\lambda I + V_m)^{-1} S_m+ ((\lambda I + V_m)^{-1}V_m -I)\theta^*}_{\zeta^2(\lambda I + V_m)} \\
    %&\leq \norm{S_m}_{\zeta^2(\lambda I + V_m)^{-1}} + \zeta({\theta^*}^{\top}((\lambda I + V_m)^{-1}V_m-I)(\lambda I + V_m)((\lambda I + V_m)^{-1}V_m-I)\theta^*)^{1/2}\nonumber \\
    &\leq \norm{S_m}_{\zeta^2(\lambda I + V_m)^{-1}}+\zeta\lambda^{1/2}({\theta^*}^{\top}(I - (\lambda I + V_m)^{-1}V_m)\theta^*)^{1/2}\\
    &=\norm{S_m}_{\zeta^2(\lambda I + V_m)^{-1}}+\zeta\lambda({\theta^*}^{\top}(\lambda I + V_m)^{-1}\theta^*)^{1/2}\\
    %&=\norm{S_m}_{\zeta^2(\lambda I + V_m)^{-1}}+ \norm{\theta^*}_{\zeta^2 \lambda(\lambda I + V_m)^{-1}} 
     &\leq \norm{S_m}_{\zeta^2(\lambda I + V_m)^{-1}}+ \zeta \lambda^{1/2}\norm{\theta^*}_2 \label{eq:Sm_err}
\end{align}
where \eqref{eq:Sm_err} follows from 
noting that $V_m$ is positive semi-definite and Löwner matrix order is reversed through inversion, so that $
{\theta^*}^{\top}(\lambda I + V_m)^{-1} \theta^* \leq {\theta^*}^{\top} (\lambda I )^{-1} \theta^* = \lambda^{-1}\norm{\theta^*}^2_2.
$

Observe that $\bar{X}_t$ for $t=mk+1,~m=0,1,\dots, \lfloor n/k \rfloor $ is $\bar{\G}_{m}$-measurable, and that by construction $\bar{\theta}_{ik+1}$ for $i=0,1,\dots,m$ are iid. Thus,  
$\E(\eta_t) 
= \E(\langle \bar{X}_t,\E(\theta^*-\theta_t|\bar{\G}_{m})\rangle) = 0
$. Furthermore, it is straightforward to verify that by Cauchy-Schwarz inequality and noting that $X_t$ takes values in the unit ball, $\eta_t$ is $\zeta$-subGaussian, i.e. 
for all $\alpha \in \R$ and every $m =0,1,\dots,\lfloor n/k \rfloor $ and $t=mk+1$ we have,
$\E(e^{\alpha \eta_t}|\bar{\G}_{m})) \leq e^{\alpha^2 \zeta^2/2}$
almost surely. In particular,
\begin{align}
\E(\exp\{\langle x,X_t \rangle \eta_t\}|\bar{\G}_{m})) \leq \exp\{\langle x,X_t \rangle^2 \zeta^2/2\} = \exp\left \{\frac{\zeta^2\norm{x}^2_{X_tX_t^{\top}}}{2}\right \},~\text{a.s.}\label{eq:zetabound}
\end{align}
for all $x\in \R^d$. 
Define $M_m(x):=\exp\{\langle x, S_m\rangle - \frac{ \zeta^2\norm{x}^2_{V_m}}{2} \}$ for $x \in \R^d$ and $m =0,1,\dots, \lfloor n/k \rfloor $. 
We have
\begin{align}
\E(M_{m}(x)|\bar{\G}_m) 
&=M_{m-1}(x)\E\left (\exp\left \{\eta_{mk+1}\langle x, \bar{X}_{mk+1}\rangle - \frac{\zeta^2}{2}\norm{x}^2_{\bar{X}_{mk+1}\bar{X}_{mk+1}^{\top}} \right\}\Big|\bar{\G}_m\right)\\
&=M_{m-1}(x) \exp\left \{- \frac{\zeta^2}{2}\norm{x}^2_{\bar{X}_{mk+1}\bar{X}_{mk+1}^{\top}} \right\} \E\left (\exp\left \{\eta_{mk+1}\langle x, \bar{X}_{mk+1}\rangle \right\}\Big|\bar{\G}_m\right)\label{eq:Mmismartin0}\\
&\leq M_{m-1}(x)\label{eq:Mmismartin}
\end{align}
where \eqref{eq:Mmismartin0} follows from the fact that $\bar{X}_{mk+1}$ is $\bar{\G}_m$-measurable, and \eqref{eq:Mmismartin} follows from \eqref{eq:zetabound}. 
Moreover, by \eqref{eq:zetabound}  and $\bar{\G}_0$-measurability of $\bar{X}_1$, for every $x \in \R^d$ it holds that
\begin{align}
\E(M_0(x)) 
&= \E\left ( -\frac{\zeta^2\norm{x}^2_{ \bar{X}_1 \bar{X}_1^{\top}}}{2} \E \left(\exp\left \{\langle x, \bar{X}_1\rangle \eta_1 \right \}\Big |\bar{\G}_0\right )\right )\leq 1. \label{eq:boundM0}
\end{align}

Let $W:\Omega \to \R^d$ be a $d$-dimensional Gaussian random vector with mean $\boldsymbol{0} \in \R^d$ and covariance matrix $(\zeta^{2}\lambda)^{-1} I \in \R^{d\times d}$; denote by $P_{W}$ its distribution on $\R^d$.  Define 
\begin{align}
\tilde{M}_m:=\int_{\R^d} M_m(x) d P_W(x)
\end{align}
for each $m \in 0,1,\dots,\lfloor n/k \rfloor $. Observe that by \eqref{eq:boundM0} and Fubini's theorem we have
\begin{align}\label{eq:exptildeM0bound}
\E \tilde{M}_0  =\E \left ( \int_{\R^d} M_0(x) dP_{W}(x)\right) = \int_{\R^d} \E M_0(x)dP_{W}(x) \leq 1
\end{align}
Furthermore, by completing the square in the integrand, we can write 
\begin{align}
\tilde{M}_m
&=\frac{1}{\sqrt{(2\pi)^d \det((\zeta^2\lambda)^{-1} I)}}\int_{\R^d} \exp\left \{\langle x, S_m\rangle - \frac{\zeta^2}{2} \norm{x}^2_{V_m} -\frac{1}{2}\norm{x}^2_{(\zeta^2\lambda I) }\right \}dx\\
&=\exp\left \{\frac{1}{2} \norm{S_m}^2_{\zeta^2(\lambda I +V_m)^{-1}}\right \}\left ( \frac{(\zeta^2\lambda)^d}{\det(\zeta^2(\lambda I +V_m))}\right)^{1/2}\\
&=\exp\left \{\frac{1}{2} \norm{S_m}^2_{\zeta^2(\lambda I +V_m)^{-1}}\right \}\left ( \frac{\lambda^d}{\det(\lambda I +V_m)}\right)^{1/2}
\label{eq:Mmcalc}
\end{align}
On the other hand, by Fubini's theorem together with \eqref{eq:Mmismartin}, we have that $\tilde{M}_m$ is a non-negative super-martingale, i.e.
\begin{align}
\E(\tilde{M}_m | \bar{\G}_m) 
=\int_{\R^d} \E(M_m(x) |\bar{\G}_m)dP_W \leq \int_{\R^d} M_{m-1}(x) = \tilde{M}_{m-1}.
\end{align}
As a result, by Doob's maximal inequality for every $\delta>0$ it holds that
\begin{align}\label{eq:doobidoo}
\Pr\left (\sup_{m \in \N} \log \tilde{M}_m \geq \log (1/\delta)\right )= \Pr\left (\sup_{m \in \N} \tilde{M}_m \geq \frac{1}{\delta}\right ) \leq \delta \E\tilde{M}_0 \leq \delta 
\end{align}
where the last inequality follows from \eqref{eq:exptildeM0bound}. 
By \eqref{eq:Mmcalc} and \eqref{eq:doobidoo}, we have
\begin{align}\label{eq:boundonSm}
\Pr\left (\left \{\exists \; m: \norm{S_m}^2_{\zeta^2(\lambda I + V_m)^{-1}} \geq 2\log \left(\frac{1}{\delta}\right) + \log\left(\frac{\det(\lambda I + V_m)}{\lambda^d}\right)\right \}\right )\leq \delta. 
\end{align}
Define $b_n^{\delta}$ such that  $\sqrt{b_n^{\delta}}:=2\sqrt{\lambda}\norm{\theta_t}_{\L_{\infty}}+\sqrt{2\log \left(\frac{1}{\delta}\right)+ d\log\left(1+\frac{n}{k \lambda d}\right )}$ and let
\begin{align}
C_m:= \left\{ \theta \in \Theta: \norm{\theta-\wh{\theta}_{m}}_{\zeta^2(\lambda I + V_{m})}^2 \leq b_n^{\delta}
\right \}.
\end{align}
By \eqref{eq:Sm_err} and \eqref{eq:boundonSm}, with probability at least $1-\delta$ it holds that,
\begin{align}
\norm{\doublehat{\theta}_m - \theta^*}_{\zeta^2(\lambda I + V_m)} 
%&\leq \zeta \lambda^{1/2}\norm{\theta^*}+\sqrt{-2\log \delta + \log\left(\frac{\det(\lambda I + V_m)}{\lambda^d}\right )}\\
\leq \zeta \sqrt{\lambda}+\sqrt{2\log \left(\frac{1}{\delta}\right) + \log\left(\frac{\det(\lambda I + V_m)}{\lambda^d}\right )}
\leq b_n^{\delta}
\end{align}
where the second inequality follows from the definition of $\zeta$ as well as from \cite[Equation 20.9, pp. 261]{LS20}. 
Then, it immediately follows that 
\begin{align}\label{eq:thetastarinCm}
\Pr(\{\exists m: \theta^* \notin C_m\}) \leq \delta.
\end{align}
Consider the instantaneous regret 
$r_{mk+1}:=\langle \theta^*,\bar{X}_{mk+1}^*-\bar{X}_{mk+1} \rangle $ of  $\bar{\boldsymbol{\pi}}$ 
for $m=1,\dots,\lfloor n/k \rfloor -1$, where $\bar{X}_m^*$ is an optimal action at $mk+1$ 
so that $\bar{X}_m^* \in \argmax_{x \in C_{m-1}} \langle \theta^*,x\rangle $ almost surely. Let us recall that the algorithm selects $\bar{X}_{mk+1} \in \argmax_{x \in \A } \max_{\theta \in C_{\max\{0,m-1\}}}\langle \theta, x\rangle $.
With probability at least $1-\delta$ we have,
\begin{align}
\langle \theta^*,\bar{X}_{mk+1}^* \rangle 
&\leq   \max_{\theta \in C_{\max\{0,m-1\}}}  \langle \theta, \bar{X}^*_{mk+1}\rangle \\%&\text{since $\theta^* \in C_{m-1}$}\\
&\leq \max_{\theta \in C_{\max\{0,m-1\}}} \langle \theta, \bar{X}_{mk+1}\rangle.  %&\text{since $\bar{X}_{mk+1} \in \argmax_{x \in \A} \max_{\theta \in C_{\max\{0,m-1\}}} \langle x,\theta \rangle $}
\end{align}
Fix some $\bar{\theta}_{mk+1} \in \argmax_{\theta \in C_{\max\{0,m-1\}}} \langle \theta,\bar{X}_{mk+1} \rangle$. With probability at least $1-\delta$ it holds that,
\begin{align}
r_{mk+1} 
%&= \langle \theta^*,\bar{X}_{mk+1}^* \rangle -\langle \theta^*,\bar{X}_{mk+1} \rangle\\
&\leq \langle \bar{\theta}_{mk+1}-\theta^*, \bar{X}_{mk+1}\rangle \\
&\leq \langle \bar{\theta}_{mk+1}-\theta^*, \bar{\theta}_{mk+1}-\theta^*\rangle^{1/2}\langle \bar{X}_{mk+1}, \bar{X}_{mk+1}\rangle^{1/2}\\
&\leq \norm{\bar{\theta}_{mk+1}-\theta^*}_{\zeta^2(\lambda I + V_{\max\{0,m-1\}})}\norm{\bar{X}_{mk+1}}_{\zeta^2(\lambda I + V_{\max\{0,m-1\}})^{-1}}\\
&\leq \sqrt{b_n^{\delta}}\times \zeta\lambda^{-1/2}\norm{\bar{X}_{mk+1}}_2 \label{eq:Xbarlowner}\\
&\leq \zeta\sqrt{\frac{b_n^{\delta}}{\lambda}}
\end{align}
where in much the same way as with  \eqref{eq:Sm_err}, \eqref{eq:Xbarlowner}  follows from noting that,  $V_m$ is positive semi-definite so the matrix order is reversed through inversion, i.e.  
${x}^{\top}(\lambda I + V_m)^{-1} x \leq {x}^{\top} (\lambda I )^{-1} x
$ for all $x \in \R^d$. It follows that 
\begin{align}
\sum_{m=1}^{\lfloor n/k \rfloor-1}r_{mk+1} \leq \sqrt{\frac{n}{k}\sum_{m=1}^{\lfloor n/k \rfloor-1}r^2_{mk+1} } \leq 2\norm{\theta_t}_{\L_{\infty}}\sqrt{\frac{n b_n^{\delta}}{k\lambda }}
\end{align}
which in turn leads to the 
\begin{align}
\mathcal R_{\bar{\boldsymbol{\pi}}}(n)&\leq (1-\delta)k\left(\norm{\theta_t}_{\L_{\infty}}+\sum_{m=1}^{\lfloor n/k \rfloor-1}  \E r_{mk+1}\right)+\delta \norm{\theta_t}_{\L_{\infty}}\\
&\leq \norm{\theta_t}_{\L_{\infty}}\left( (1-\delta)k \left( 1+ 2\sqrt{\frac{n b_n^{\delta}}{k\lambda }} \right)+\delta\right)\label{eq:Rpibar}
\end{align}
 By \eqref{eq:Rpibar} and \eqref{eq:linkingregrets}, taking $\delta = 1/n$, and noting that $\phi_k \leq a e^{-\gamma k}$ for some $a,\gamma \in (0,\infty)$, we have,
\begin{align}
\frac{\mathcal R_{{\boldsymbol{\pi}}}(n)}{\norm{\theta_t}_{\L_{\infty}}} 
&\leq 6n^2 a e^{-\gamma k} + k \left (1+ 4\sqrt{n} \norm{\theta_t}_{\L_{\infty}} + \sqrt{\frac{8 d n\log(n(1+\frac{n}{\lambda d}))}{\lambda}}\right)+\frac{1}{n}\label{eq:fullreg}
\end{align}
Optimizing \eqref{eq:fullreg} for $k$ 
we obtain
\begin{align}\label{eq:kstar}
k^{\star}=\left \lceil \frac{1}{\gamma}\log\left( \frac{6 a \gamma n^{2} }{1+ 4\sqrt{n} \norm{\theta_t}_{\L_{\infty}} + \sqrt{\frac{8 d  n \log(n(1+\frac{n}{\lambda d}))}{\lambda}}} \right)\right \rceil .
\end{align}
For $n \geq \left \lceil \frac{3 a \gamma \sqrt{\lambda}}{2 \sqrt{\lambda} \norm{\theta_t}_{\L_{\infty}}+\sqrt{2}} \right \rceil $ and $k=\max\{1,k^{\star}\}$ we have,
\begin{align}
 \frac{\mathcal R_{{\boldsymbol{\pi}}}(n) }{\norm{\theta_t}_{\L_{\infty}}}
&\leq \frac{4}{\gamma}\sqrt{n d\log(n(1+\frac{n}{\lambda d})}\left (1+ 4\norm{\theta_t}_{\L_{\infty}} + \sqrt{\frac{1}{2\lambda}}\right)\log\left( \frac{3 a \gamma n^{2} }{ 2\norm{\theta_t}_{\L_{\infty}} + \sqrt{\frac{2}{\lambda}}} \right)+\frac{1}{n}\\
& \leq  \left(\frac{12(\sqrt{2\lambda}+4\sqrt{2\lambda}\norm{\theta_t}_{\L_{\infty}}+1)}{\gamma\sqrt{2\lambda} }\right) \sqrt{2 d   n\log(n(1+\frac{n}{\lambda d}))} \log (n) +\frac{1}{n}.
\end{align}
\end{proof}
Algorithm~\ref{alg:fh} can be turned into an infinite-horizon strategy using a classical doubling-trick. 
 The procedure is outlined in Algorithm~\ref{alg:ih} below. 
 As in the finite-horizon setting, the algorithm aims to minimize the regret with respect to \eqref{eq:tildenu}, in the case where the $\varphi$-mixing coefficients of the process $(\theta_t,~ t \in \N)$ satisfy $\varphi_m \leq a e^{-\gamma m}$ for some fixed $a, \gamma \in (0,\infty)$ and all $m \in \N$. The algorithm works as follows.
At every round $i =0,1,2,\dots$ a horizon is determined as $n_i = 2^i n_0$ with $n_0:= \max\left \{1, \left \lceil \frac{3 a \gamma \sqrt{\lambda}}{2 \sqrt{\lambda} \norm{\theta_t}_{\L_{\infty}}+\sqrt{2}} \right \rceil \right \}$ and the algorithm plays the finite-horizon strategy specified in Algorithm~\ref{alg:fh} from $t=\sum_{j=0}^{i-1} n_j$ to $t=\sum_{j=0}^{i} n_j$. 

The regret of this algorithm is given in Theorem~\ref{thm:regih} below. 
 \begin{algorithm}[ht!]
\caption{LinMix-UCB (infinite horizon)}\label{alg:ih}
\begin{algorithmic}
\Require regularization parameter $\lambda$; 
$\varphi$-mixing rate parameters: $a,\gamma \in (0,\infty)$
\State~
\State $n_0 \gets \max\left \{1, \left \lceil \frac{3 a \gamma \sqrt{\lambda}}{2 \sqrt{\lambda} \norm{\theta_t}_{\L_{\infty}}+\sqrt{2}} \right \rceil \right \}$
\State~
\For{$i=0,1,2,
\dots$}
\State $n_i \gets 2^i n_0$
\State Run Algorithm\ref{alg:fh}$(n_i,\lambda, a, \gamma)$ from $t = (2^i-1)n_0+1$ to $t = (2^{i+1}-1)n_0$
\EndFor
\end{algorithmic}
\end{algorithm}
\begin{theorem}\label{thm:regih}
Suppose, as in Theorem~\ref{thm:regfh}, that there exist $a, \gamma \in (0,\infty)$ such that 
$\varphi$-mixing coefficients of the stationary process $(\theta_t,~ t \in \N)$ satisfy 
$\varphi_m \leq a e^{-\gamma m}$ for all $m \in \N$. 
Then, the regret (with respect to $\tilde{\nu}_n)$ of LinMix-UCB (infinite horizon) after $n$ rounds of play satisfies
\begin{align*}
\frac{\mathcal R_{{\boldsymbol{\pi}}}(n)}{2\norm{\theta_t}_{\L_{\infty}}}
\leq \left(n_0+C(\log_2 (n+1)+1)\log (2(n+1))\sqrt{ (n+1)d\times\log\left (2(n+1)(1+\frac{2(n+1)}{\lambda d})\right)} \right)
\end{align*}
where $C:= \left (\frac{12(\sqrt{2\lambda}+4\sqrt{2\lambda}\norm{\theta_t}_{\L_{\infty}}+1)}{\gamma\sqrt{2\lambda} }\right) $, and $n_0 := \max\left \{1, \left \lceil \frac{3 a \gamma \sqrt{\lambda}}{2 \sqrt{\lambda} \norm{\theta_t}_{\L_{\infty}}+\sqrt{2}} \right \rceil \right \}$, and $\lambda >0$ is the regularization parameter.
\end{theorem}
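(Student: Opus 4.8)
The plan is to reduce everything to the finite-horizon guarantee of Theorem~\ref{thm:regfh} through the standard doubling-trick argument. The key structural observation is that the relaxed oracle value is linear in the horizon: since $\theta^*$ is constant and $\A$ is the unit ball, the supremum in \eqref{eq:tildenu} is attained by the fixed action $\theta^*/\norm{\theta^*}_2$, so $\tilde{\nu}_n = n\norm{\theta^*}_2$. Consequently the regret $\mathcal{R}_{\boldsymbol{\pi}}(n) = \sum_{t=1}^n(\norm{\theta^*}_2 - \E\langle\theta_t,X_t\rangle)$ is an additive functional of time and therefore splits as a sum of the regrets accrued on the successive episodes $i=0,1,2,\dots$ on which Algorithm~\ref{alg:ih} runs fresh copies of Algorithm~\ref{alg:fh} with horizons $n_i=2^i n_0$. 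A convenient first step is to dispatch the trivial regime $n<n_0$: since $\norm{\theta^*}_2\le\norm{\theta_t}_{\L_\infty}$ and $\abs{\E\langle\theta_t,X_t\rangle}\le\norm{\theta_t}_{\L_\infty}$, every per-step contribution is at most $2\norm{\theta_t}_{\L_\infty}$, whence $\mathcal{R}_{\boldsymbol{\pi}}(n)\le 2n\norm{\theta_t}_{\L_\infty}\le 2n_0\norm{\theta_t}_{\L_\infty}$, already absorbed by the leading $n_0$ term.

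Next I would treat the main regime $n\ge n_0$. Let $I$ be the index of the episode containing round $n$, i.e.\ $(2^I-1)n_0 < n \le (2^{I+1}-1)n_0$. Episode $i$ spans $n_i=2^i n_0$ rounds and, crucially, Algorithm~\ref{alg:ih} restarts Algorithm~\ref{alg:fh} at its beginning, so the action played within episode $i$ is a deterministic function of only the pay-offs observed inside that episode. By stationarity of $(\theta_t,~t\in\N)$ the law of the episode-$i$ segment coincides with that of the first $n_i$ coordinates of the process, and hence the expected regret incurred on a complete episode $i$ equals the finite-horizon regret of a fresh run of Algorithm~\ref{alg:fh} with horizon $n_i$. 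Since $n_i \ge n_0 \ge \lceil\tfrac{3a\gamma\sqrt{\lambda}}{2\sqrt{\lambda}\norm{\theta_t}_{\L_\infty}+\sqrt2}\rceil$, Theorem~\ref{thm:regfh} applies to every episode and bounds its regret by $\norm{\theta_t}_{\L_\infty}\big(\tfrac1{n_i}+C\log(n_i)\sqrt{2dn_i\log(n_i(1+\tfrac{n_i}{\lambda d}))}\big)$.

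The summation is then routine. Using $n_i\le n_I<n+n_0\le 2n\le 2(n+1)$, I would bound each square-root factor by $2\sqrt{d(n+1)\log(2(n+1)(1+\tfrac{2(n+1)}{\lambda d}))}$ and each $\log(n_i)$ by $\log(2(n+1))$; the number of episodes satisfies $I+1\le\log_2(n+1)+1$, while the harmless terms obey $\sum_{i}n_i^{-1}\le 2/n_0$ and collapse into the $n_0$ summand. Dividing through by $2\norm{\theta_t}_{\L_\infty}$ — the factor $2$ precisely absorbing the $\sqrt{4}=2$ produced when $2dn_i$ is replaced by $4d(n+1)$ under the root — yields exactly the claimed inequality.

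The main obstacle is the incomplete final episode $I$, on which the per-step regret relative to $\tilde{\nu}$ need not be non-negative (the algorithm may exploit temporal dependence and beat a fixed multiple of $\theta^*$), so one cannot simply upper bound the prefix regret by the full-episode regret via a monotonicity argument on $\mathcal{R}_{\boldsymbol{\pi}}$ itself. What rescues the argument is that the finite-horizon analysis ultimately controls the regret through the sum of the genuinely non-negative instantaneous regrets $r_{mk+1}=\langle\theta^*,\bar{X}^*_{mk+1}-\bar{X}_{mk+1}\rangle$ together with a coupling error that is monotone in the horizon; read in this way, the bound of Theorem~\ref{thm:regfh} is valid on every prefix of an episode of horizon $n_I$ and therefore still governs the truncated episode $I$. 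Verifying this prefix-monotonicity with care, and keeping the multiplicative constants aligned with the definition of $C$, is where the real bookkeeping lies; the geometric series and the episode-count estimate are otherwise immediate.
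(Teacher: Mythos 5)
Your proposal follows essentially the same route as the paper: the standard doubling-trick decomposition of the regret into episodes of length $n_i=2^i n_0$, an application of Theorem~\ref{thm:regfh} to each episode (valid since each $n_i\geq n_0$ and the process is stationary), and the geometric summation with $2^{j(n)}\leq 2(n+1)$ and $\sum_i n_i^{-1}\leq 2/n_0$. Your additional care regarding the truncated final episode --- noting that the finite-horizon bound is assembled from non-negative instantaneous regrets $r_{mk+1}$ plus coupling errors monotone in the horizon, so it remains valid on a prefix --- is a legitimate refinement of a point the paper's proof passes over silently, but it does not change the argument's structure.
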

 \begin{proof}
 For $n \in \N$, let $j(n) := \min\{i \in  \N: \sum_{i=0}^i n_i \geq n\}$.
Recall that the algorithm plays 
the finite-horizon strategy of Algorithm~\ref{alg:fh} during
non-overlapping intervals of length $n_i = 2^i n_0,~i=0,1,2,\dots$ with $n_0 := \max\left \{1, \left \lceil \frac{3 a \gamma \sqrt{\lambda}}{2 \sqrt{\lambda} \norm{\theta_t}_{\L_{\infty}}+\sqrt{2}} \right \rceil \right \}$. By Theorem~\ref{thm:regfh} and that $\sum_{i=0}^{j(n)}\frac{1}{n_i} \leq n_0\sum_{i=0}^{\infty}2^{-i}\leq 2n_0$,
\begin{align}\label{eq:regsum}
 \frac{\mathcal R_{{\boldsymbol{\pi}}}(n) }{\norm{\theta_t}_{\L_{\infty}}}
&\leq 2 n_0+ C \sum_{i=0}^{j(n)}\log(n_i)\sqrt{2d n_i\log(n_i(1+\frac{n_i}{\lambda d}))}
\end{align}
with the constant $C$ as given in the theorem statement. 
The result follows from \eqref{eq:regsum}, and the fact that  as follows from the definition of $j(n)$ we have,
$\sum_{i=0}^{j(n)} n_i = n_0(2^{j(n)+1}-1) \geq n$, so that 
$j(n) = \left \lceil \log_2 (\frac{n}{n_0}+1)\right \rceil$ and $2^{j(n)} \leq 2(n+1)$. 
\end{proof}
\section{\sc \bfseries Outlook}
We have formulated a generalization of both the classical linear bandits with iid noise, and the finite-armed restless bandits. In the problem that we have considered, an unknown $\R^d$-valued stationary $\varphi$-mixing sequence of parameters $(\theta_t,~t \in \N)$ gives rise to the pay-offs. We have provided an approximation of the optimal restless linear bandit strategy via a UCB-type algorithm, in the case where the process $(\theta_t,~t \in \N)$ has an exponential mixing rate. The regret of the proposed algorithm, namely LinMix-UCB, with respect a more relaxed oracle which always plays a multiple of  $\E\theta_t$, is shown to be $\O\left(\sqrt{d n\polylog(n) }\right)$. This result differs from that of \cite{GK19} for the (simpler) finite-armed restless $\varphi$-mixing bandits, in that they do not require an exponential $\varphi$-mixing rate in order to ensure an $\O(\log n)$ regret with respect to the highest stationary mean in their setting. 
Our algorithm requires the mixing rate parameters $a, \gamma $. While this is  standard in time-series analysis, an interesting objective would be to relax this condition and infer the mixing parameters while maximizing the expected cumulative pay-off.
Observe that estimating the mixing coefficients while playing a bandit strategy requires more care as the estimate can not be readily obtained by methods such as those proposed by \cite{KL23, GK24} which are designed to estimate mixing coefficients from a fully observed sample-path of a stationary process. Note that when playing a bandit strategy, the learner only partially observes the process $(\theta_t, t \in \N)$. 
Moreover, a key challenge in estimating $\varphi$-mixing coefficients, even from full observations, lies in conditioning on potentially rare events with small probabilities. At this point, it is unclear whether $\varphi$-mixing coefficients can indeed be consistently estimated from stationary sample-paths. 
Another natural open problem is the derivation of a regret lower-bound with respect to $\nu_n$.

\vskip 0.2in

% Generated by IEEEtran.bst, version: 1.14 (2015/08/26)

\end{document}